\algnewcommand{\parState}[1]{\State%
  \parbox[t]{\dimexpr\linewidth-\algmargin}{\strut #1\strut}}
\algrenewcommand{\algorithmiccomment}[1]{\hskip1em$\rightarrow$ \footnotesize#1 \normalsize}
\newcommand\citepossessive[1]{\citeauthor{#1}'s\ (\citeyear{#1})}
\newtheorem{prop}{Proposition}
\crefname{section}{\S}{\S\S}
\Crefname{section}{\S}{\S\S}
\crefname{table}{Tab.}{Tabs.}
\crefname{figure}{Fig.}{}
\crefname{algorithm}{Algorithm}{}
\crefname{algorithm}{Algorithm}{}
\crefname{line}{Line}{}
\crefname{appendix}{App.}{}
\crefname{thm}{Theorem}{}
\crefname{def}{Definition}{}
\crefname{prop}{Proposition}{Propositions}
\newcommand{\bz}{\mathbf{z}}
\newcommand{\bw}{{\boldsymbol w}}
\newcommand{\bh}{\mathbf{h}}
\newcommand{\bb}{\mathbf{b}}
\newcommand{\bl}{{\boldsymbol \ell}}
\newcommand{\vw}{\boldsymbol{w}}
\newcommand{\vu}{\boldsymbol{u}}
\newcommand{\pgen}{p_{\mathrm{\phi}}}
\newcommand{\defn}[1]{\textbf{#1}}
\newcommand{\ent}{\mathrm{H}}
\newcommand{\calL}{\mathcal{L}}
\newcommand{\calM}{\mathcal{M}}
\newcommand{\word}[1]{\textit{#1}}
\newcommand{\concept}[1]{\texttt{#1}}
\newcommand{\cost}{\mathrm{cost}}
\newcommand{\softmax}{\mathrm{softmax}}
\newcommand{\ncode}{\mathbb{C}_{\textit{nat}}}
\newcommand{\optcode}{\mathbb{C}_\star}
\newcommand{\morphcode}{\mathbb{C}_{\textit{morph}}}
\newcommand{\graphcode}{\mathbb{C}_{\textit{graph}}}
\newcommand{\morphgraphcode}{\mathbb{C}_{\textit{morph+graph}}}
\newcommand{\cluster}{z_n}
\newcommand{\clusterrv}{Z_n}
\newcommand{\otherclusters}{\mathbf{z}_{<n}}
\newcommand{\ntokens}{\mathrm{c}_{<n}^{(z_n)}}
\newcommand{\ntables}{K_{<n}}
\newcommand{\vh}{{\boldsymbol h}}
\newcommand{\entropy}{\mathrm{H}}
\newcommand{\STOP}{\texttt{EoW}}
\newtheorem{mydef}{Definition}
\definecolor{modelpink}{HTML}{b5869D}
\definecolor{modelgreen}{HTML}{86a778}
\definecolor{modelorange}{HTML}{c99b6c}
\definecolor{codec0}{HTML}{d73570}
\definecolor{codec1}{HTML}{340371}
\definecolor{codec2}{HTML}{dbb85f}
\definecolor{codec3}{HTML}{888EAD}
\definecolor{codec4}{HTML}{00bf8b}
\definecolor{codec5}{HTML}{427657}
\definecolor{codec6}{HTML}{134679}
\title{How (Non-)Optimal is the Lexicon?}
\newcommand{\ucambridge}{\normalfont \text{\textipa{D}}}
\newcommand{\irenes}{\normalfont \text{\textipa{7}}}
\newcommand{\ethz}{\text{\normalfont \textipa{Q}}}
\newcommand{\harvard}{\normalfont \text{\textipa{@}}}
\newcommand{\ucsb}{\normalfont \text{\textipa{N}}}
\newcommand{\mpi}{\normalfont \text{\textipa{9}}}
\newcommand{\hse}{\normalfont \text{\textipa{6}}}
\author{%
Tiago Pimentel$\thanks{~~Equal contribution}$ $^{\,,\ucambridge}$%
~\;~\;~ Irene Nikkarinen$^{*,\ucambridge,\irenes}$%
~\;~\;~ Kyle Mahowald$^{\ucsb}$ \\
\textbf{Ryan Cotterell}$^{\ucambridge,\ethz}$
~\;~\;~ \textbf{Dami\'{a}n E. Blasi}$^{\harvard,\mpi,\hse}$\\
  $^{\ucambridge}$University of Cambridge%
  ~\;~\;~\;~$^{\irenes}$Yle%
  ~\;~\;~\;~$^{\ucsb}$University of California, Santa Barbara%
  ~\;~\;~\;~$^{\ethz}$ETH Z\"{u}rich \\
  $^{\harvard}$Harvard University%
  ~\;~\;~\;~$^{\mpi}$MPI for Evolutionary Anthropology%
  ~\;~\;~\;~$^{\hse}$HSE University
   \\
  \texttt{tp472@cam.ac.uk}%
  ,~\;~ \texttt{irene.nikkarinen@gmail.com}%
  ,~\;~ \texttt{mahowald@ucsb.edu} \\%
  \texttt{ryan.cotterell@inf.ethz.ch}%
  ,~\;~ \texttt{dblasi@fas.harvard.edu}
 }
\date{}
\begin{document}
\maketitle

\begin{abstract}
The mapping of lexical meanings to wordforms is a major feature of natural languages. 
While usage pressures might assign short words to frequent meanings (Zipf's law of abbreviation), the need for a productive and open-ended vocabulary, local constraints on sequences of symbols, and various other factors all shape the lexicons of the world's languages.
Despite their importance in shaping lexical structure, the relative contributions of these factors have not been fully quantified.
Taking a coding-theoretic view of the lexicon and making use of a novel generative statistical model, we define upper bounds for the compressibility of the lexicon under various constraints. Examining corpora from 7 typologically diverse languages, we use those upper bounds to quantify the lexicon's optimality and to explore the relative costs of major constraints on natural codes. 
We find that (compositional) morphology and graphotactics can sufficiently account for most of the complexity of natural codes---as measured by code length.
\end{abstract}

\section{Introduction} \label{sec:introduction}

Communication through language can be modeled under Shannon's classic communication framework \citep{shannon1948mathematical}.
Under this perspective, linguistic utterances are \defn{codes}---which need to be decoded by a \defn{receiver} (listener) who is interested in the \defn{message} (meaning) they encode. Famously, \newcite{zipf1949human} posited that language users shape these codes so to accommodate the \defn{principle of least effort}. The most widely discussed and investigated empirical evidence for this feature is the so-called \defn{law of abbreviation}, an ostensive negative correlation between word frequency and word length \citep{zipf1935psycho,bentz2016zipf}. Communication effort decreases by encoding frequent messages in shorter words.
This correlation, however, is characteristically modest. There are many instances of short low-frequency words, like \word{wen} and \word{jib} in English,\footnote{These mean, respectively, a benign tumor on the skin and a triangular sail on a boat.} and long frequent words, like \word{happiness} and \word{anything}. While the lexicon might be shaped by economy of expression, it is clearly not fully optimized for it. There are multiple---possibly competing---reasons why this could be the case.

\begin{figure}
    \centering
    \includegraphics[width=\columnwidth]{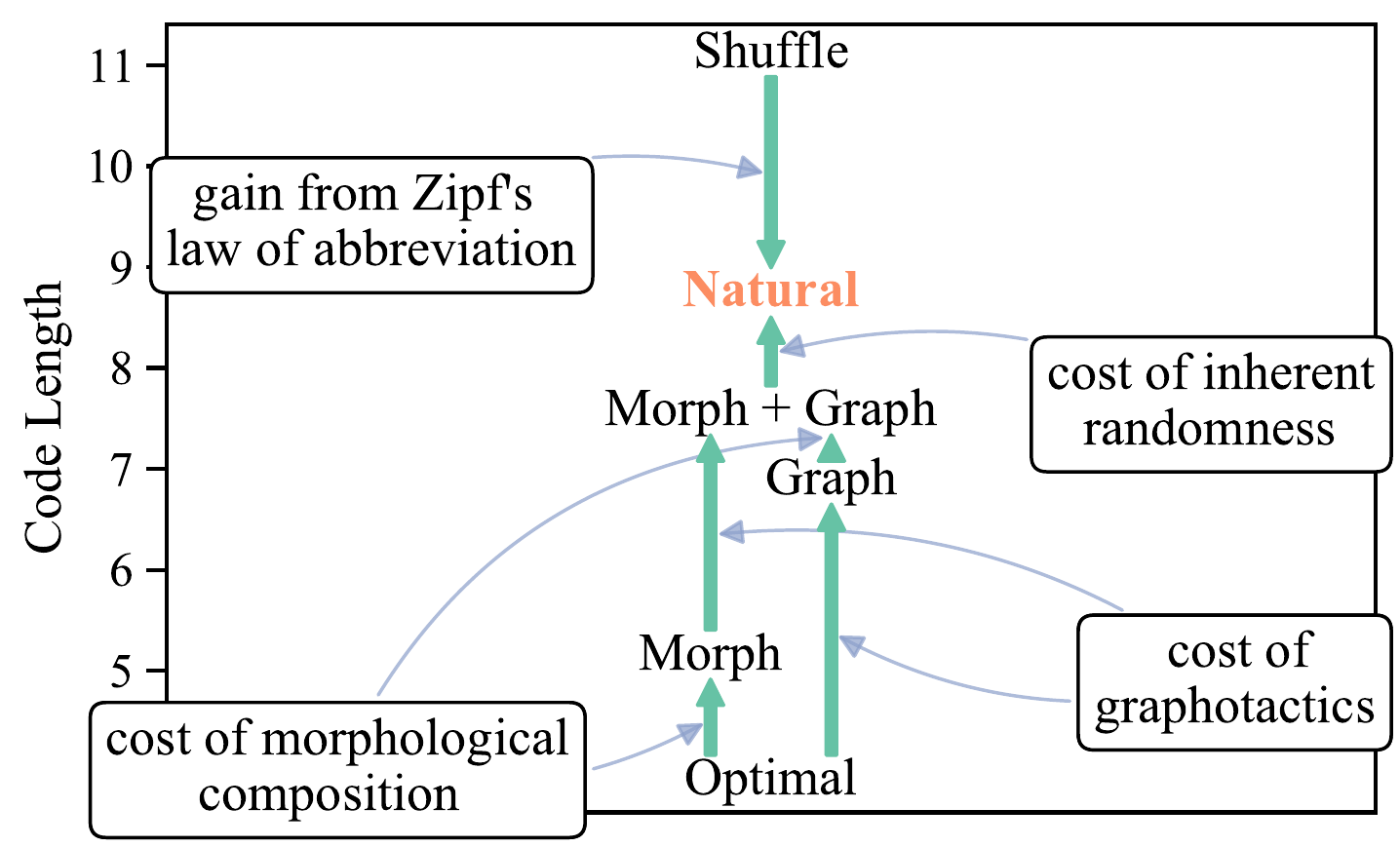}
    \caption{The average code length---under our coding schemes---on a representative language (Finnish). The distance between the baselines can be thought of as the  cost of each constraint added to the system.}
    \label{fig:costs}
    \vspace{-6pt}
\end{figure}
First of all, the sequence of speech sounds, signs, or orthographic characters that serve as building blocks in a language comply with specific rules. These are referred to as \defn{phonotactics} (in the case of speech sounds) and \defn{graphotactics} (in written language).\footnote{All languages impose these constraints on their wordforms, which might be leveraged for production and learnability \citep{vitevitch1999probabilistic,boersma1998functional}.} 
On top of these constraints, the lexicons of many languages of the world re-use sub-parts of words; these sub-parts can be productively composed to produce new meanings---which is referred to as \defn{morphological composition}. 
This largely determines the family of wordforms associated with a given basic meaning---for instance, given the wordform \emph{health} and its meaning, the nominal
morphology of English readily provides the forms for many of its derived meanings, including \emph{healthy}, \emph{unhealthy}, \emph{healthier}, etc.

Beyond these well-attested constraints, it might be argued that the negative correlation between the length of a word and its frequency is not the locus of optimization given the economy of expression pressure. Instead, wordforms might be efficiently encoding meanings based on their contextual surprise rather than frequency \citep{piantadosi2011word}.
Finally, there is no reason to expect lexicons to be \emph{fully optimized} for the economy of expression---this factor might steer languages in a given direction, but there is certainly room for non-compliance. Languages are, after all, not engineered systems but cultural artifacts.
In this paper, we examine how \emph{marginally non-optimal} the lexicon is by taking the vantage point of the law of abbreviation. 
We develop a method to quantify the role of different linguistic constraints on determining wordforms, and we produce estimates on how compressible the lexicon could be in their absence (including morphology and phonotactics/graphotactics). 
We thus define an upper bound for the compressibility of a lexicon optimized purely for word length efficiency.
\section{(Non-)Optimality in the Lexicon} \label{sec:optimality}

As stated above, our notion of optimality is derived from Zipf's principle of least effort in the form of the law of abbreviation \citep{zipf1949human,mandelbrot1953informational,ferrer2020optimal}. However, this is by no means the only theory under which wordforms are optimized for encoding their messages.

One influential hypothesis is that languages optimize for uniform information density \citep{fenk1980konstanz,aylett2004smooth,levy2007speakers}---roughly keeping the amount of information conveyed in a unit of time constant. In an information-theoretic setting, this would be equivalent to maximizing the use of a noisy channel between the speaker and an audience---keeping the transmission rate close to the channel capacity.\looseness=-1

Under this view, it is not necessarily the case that words should be as short as possible. Rather, words that are infrequent or typically less predictable in context should be longer and take more time to produce---perhaps because the increased duration makes them more robust to noise.
Consistent with this perspective, it has been shown that, in production, words with higher information content take longer to pronounce \cite{bell2003effects,jurafsky_probabilistic_2001,gahl2008time}. 
Additionally, words which are typically predictable in context are shorter than words which are less predictable in context \cite{piantadosi2011word}. 

On another note, a purely coding-theoretically efficient language could make the lexical codes context dependent \citep{piantadosi2012communicative}, since context often disambiguates words \citep{dautriche2018learning,pimentel-etal-2020-speakers}.
Additionally, the meaning or message being conveyed by a given word might bias its form. Within languages, there seems to be a pressure for more semantically similar words to also be more phonologically similar \citep{monaghan_how_2014,dingemanse2015arbitrariness,dautriche2017wordform,pimentel2019meaning}. 
Across languages, words for the same referents exhibit detectable patterns in term of their phonological makeup \cite{blasi2016sound}, phonotactics \cite{pimentel-etal-2021-finding}, as well as word length \cite{lewis2016length}---this is driven by semantic features such as size, quality or complexity. 
Finally, there is a cross-linguistic tendency for lexicons to place higher surprisal in word-initial segments \citep{son2003efficient,son2003information,king2020greater,pimentel-etal-2021-disambiguatory} making words more constrained in their choice of final segments.
These aspects of language might also collide with a purely Zipfian conception of lexicon optimality.

In this work, however, we consider optimality exclusively in the Zipfian sense of compressibility, and we ask how far natural language lexicons are from accommodating to this paradigm. We build a number of models that differ in relation to whether they accommodate to the law of abbreviation, to compositional morphology and to graphotactics. The comparison among these systems allows us to explore the extent to which each part of the linguistic system contributes to the overall cost of the linguistic code. It should be noted, though, that the consequences of unmodeled sources of structure in the lexicon (such as persistent sound-meaning associations or the adaptation of the code to surprisal effects) will forcibly be confounded with the overall lack of fit between our models and the data.

The \textbf{morphological cost}
---i.e. the cost of morphology to a code's length---is associated with the fact that, across many languages, words are often constructed of meaningful sub-parts that are productively reused across the lexicon. 
Practically, this means that the wordforms of different meanings might not be independent if they overlap in a particular dimension that is captured by the morphology of the language. For instance, most wordforms that express two or more referents of a kind share a word-final suffix \word{-s} in English (\word{tower\underline{s}}, \word{cat\underline{s}}, \word{idea\underline{s}}, etc). We treat this cost by considering optimal codes where the basic unit in the lexicon is not the word but sub-pieces, as determined by the unsupervised morphological parser Morfessor
\citep{creutz2007unsupervised,smit-etal-2014-morfessor}.\footnote{We also present results using the additional sub-word tokenizers: byte pair encoding \citep{gage1994new,sennrich-etal-2016-neural} and word piece \citep{schuster2012japanese}. See \citet{bostrom-durrett-2020-byte} for a discussion of the tradeoffs of these schemes, in terms of performance and compressibility.} Under this regime, a word like \word{unmatched} is parsed into the tokens \word{un}, \word{match}, and \word{ed}.

The \defn{graphotactic cost} concurrently imposes a set of additional constraints, determining which sequences of grapheme segments can constitute a valid wordform in a given language. 
While the main driver of these lexical constraints is actually phonotactics---which imposes rules dictating the possible \emph{phoneme} sequences---we focus on graphotactics because our object of study is written language corpora. 
The degree to which phonotactics and graphotactics mirror each other vary substantially across languages; thus, in this work (which uses corpora from Wikipedia) we make our claims about language in the written modality and leave it to future work to generalize this work to the phonological domain. 
This could be done by applying the same method to phonemic representations of words.\looseness=-1

\section{A Coding-theoretic View of the Lexicon}

This paper treats the \defn{lexicon}, which we define as a set of pairs: $\calL = \{(m_n, \bw_n)\}_{n=1}^N$.
In general, this set will be infinite; $m_n$ refers to a \defn{lexical meaning}, taken from an abstract set $\calM$, and $\bw_n$ refers to a \defn{wordform},
taken from $\Sigma^*$, the Kleene closure of a grapheme alphabet $\Sigma$.\footnote{This alphabet is augmented with an end-of-word symbol.}
When the exact index is unnecessary in context, we will drop the subscripted $n$; and we make use of uppercase letters to refer to random variables (e.g. $M$ or $W$) where necessary.
We will write meanings in typewriter font, e.g. \concept{cat}, and
wordforms in italics: \word{cat} (English), 
\word{kissa} (Finnish).
\looseness=-1

Viewing the lexicon from a coding-theoretic perspective, we consider the mapping from meaning to form as a \defn{code}: $\mathbb{C} : \calM \rightarrow \Sigma^*$.
Every language comes endowed with a \defn{natural code} $\ncode$, which is the observed mapping from lexical meanings to forms. 
As an example, consider the meaning \concept{cat} and its Finnish form: 
we have $\ncode(\concept{cat}) = \word{kissa}$.
The topic of interest in this paper is the efficiency of language's natural codes. 

\paragraph{The space of meanings and lexical ambiguity.}
The space of meanings $\calM$ is non-trivial to define, but could be operationalized as $\mathbb{R}^d$, which is infinite, continuous and uncountable \citep{pilehvar2020embeddings}.
Meanwhile, the space of wordforms $\Sigma^*$ is also infinite,  but discrete and countable.
As such, many meanings $m_n$ must be mapped to the same form, resulting in lexical ambiguity. See \citealt{pimentel-etal-2020-speakers} for a longer discussion on these operationalizations.
In this work, though, we do not engage with such ambiguity, considering $\calM$ as an abstract set of meanings, each of which defined by a distinct wordform---i.e. the code $\ncode$ is a bijection.
A consequence of this strategy is that we take the space of meanings to be infinite, but discrete and countable; we only distinguish as many meanings as there are words, therefore, we end up with a countable number of meanings.
Additionally, by considering a distinct meaning $m_n$ for each wordform $\bw_n$ in the lexicon, we only consider codes with as much lexical ambiguity as in the original language.\footnote{Lexical ambiguity allows the mapping of multiple meanings to the same wordform and, in doing so, it enables the preferential re-use of short words \citep{piantadosi2012communicative}. Thus, the mapping of multiple meanings to the same form could be a source of efficiency in the lexicon \citep{fenk2008complexity,ferrer2018origins,casas2019polysemy,trott2020human,xu2020conceptual}. Nonetheless, we do not treat it explicitly here.}

\subsection{Words as Meanings} \label{sec:words_meaning}

The \defn{unigram distribution}  represents the frequency of each wordform in a text, i.e. the probability of a token without conditioning on context $p(W=\word{kissa})$.
In this work, though, we assume the unigram distribution is a distribution over $\calM$, e.g. $p(M=\concept{cat})$---this way we can analyze how changing the code $\mathbb{C}$ would affect its efficiency.

As stated above, though, we take $\ncode$ to be a bijection.
Such an assumption implies there is a deterministic function from wordforms to meanings in a specific lexicon $\ncode^{-1}(\bw)=m$.
Probabilistically speaking, we write
\begin{align}
    p(M = m \mid W=\bw) = \mathbbm{1}\Big\{m=\ncode^{-1}(\bw)\Big\} \\
    p(W = \bw \mid M=m\} = \mathbbm{1}\Big\{\bw=\ncode(m)\Big\}
\end{align}
This mapping implies
\begin{align} \label{eq:distribution_equiv}
    p(M=m_n) &= \sum_{\bw \in \Sigma^*} p(M=m_n, W=\bw)  \\
    &= p(W=\bw_n) \nonumber
\end{align}
Given this equality, we can reduce the problem of estimating the unigram distribution over meanings $p(m)$ to the one over wordforms $p(\bw)$.

\subsection{Code-length and optimality}

As stated above, we assume the unigram distribution to be a distribution over $\calM$. 
We now define the \defn{cost} of a code as its expected length:
\begin{equation}\label{eq:cost_code}
  \cost(\mathbb{C}) = \sum_{m \in \calM} p(m)\,|
 \mathbb{C}(m)|
\end{equation}
A smaller cost, then, implies a more \defn{efficient} code.
The famous \defn{source-coding theorem} of \newcite{shannon1948mathematical} gives us a theoretical limit on coding cost:
\begin{equation}\label{eq:shannon}
    \ent(M) \leq \cost(\optcode) < \ent(M) + 1
\end{equation}
where we define $\optcode$ to be the most efficient code, and where $\ent(M)$ is the entropy of distribution $p$:%
\begin{equation}
    \ent(M) = \sum_{m \in \calM} p(m)\, \log_{|\Sigma|} \frac{1}{p(m)}
\end{equation}
According to the source-coding theorem, if we know the true distribution $p$ over lexical meanings, then we know how to optimally code them. 
This turns the problem of estimating the efficiency of the lexicon into the one of estimating the entropy of an unknown discrete distribution $p$, a well-defined task with a pool of previous work \citep{miller1955note,antos2001convergence,paninski2003estimation,archer2014bayesian}.
Because the distributions over wordforms and meanings are equivalent, we estimate the entropy $\ent(M)$ using wordforms:%
\begin{align} \label{eq:entropy_equiv}
\ent(M) &= \ent(W) = \sum_{\bw \in \Sigma^*} p(\bw)\, \log_{|\Sigma|} \frac{1}{p(\bw)} 
\end{align}

\subsection{Finite and Infinite Support}

This section reviews a few technical results as regards the construction of codes from a probability distribution. 
If $p$ had finite support---i.e. there were a finite set of possible meanings or wordforms---a simple  Huffman encoding \cite{huffman1952method} would give us an optimal code for our lexicon.
However, this is not the case---$p(\bw)$ has support on all of $\Sigma^*$---so we might need a more complex strategy to get such a code.
\citet{linder1997existence} proved the existence of an optimal encoding for a distribution with infinite support, given that it has finite entropy.
\begin{prop} \label{prop:bound_cost}
If distribution $p(\bw)$ has finite entropy, i.e.
$\ent(W) < \infty$, then there exists an optimal encoding for it such that: $\cost(\optcode) < \ent(M) + 1$.
\end{prop}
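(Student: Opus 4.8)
The plan is to prove the proposition in two stages, following the standard strategy behind the source-coding theorem but adapting it to the countably infinite support of $p(\bw)$. First I would exhibit a concrete prefix-free code whose cost already beats the target bound, which establishes that the infimum of costs over all such codes lies strictly below $\ent(M)+1$. Then I would argue that this infimum is actually \emph{attained} by some code $\optcode$; this is the content of the existence claim and the part that genuinely requires the finite-entropy hypothesis.

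For the first stage I would use a Shannon code: to each wordform $\bw$ assign a codeword of length $\ell(\bw) = \lceil \log_{|\Sigma|} \tfrac{1}{p(\bw)} \rceil$. Since $\ell(\bw) \geq \log_{|\Sigma|} \tfrac{1}{p(\bw)}$, these lengths satisfy Kraft's inequality, $\sum_{\bw} |\Sigma|^{-\ell(\bw)} \leq \sum_{\bw} p(\bw) = 1$, and Kraft's inequality—valid also for countable codeword sets—guarantees that a prefix-free code with exactly these lengths exists. Using $\lceil x \rceil < x + 1$ together with the equivalence $\ent(M)=\ent(W)$ established above, its cost satisfies $\sum_{\bw} p(\bw)\lceil \log_{|\Sigma|} \tfrac{1}{p(\bw)}\rceil < \ent(W)+1 = \ent(M)+1$, which is finite precisely because $\ent(W)<\infty$. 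Hence the infimal cost $L^\star := \inf_{\mathbb{C}} \cost(\mathbb{C})$ satisfies $L^\star < \ent(M)+1 < \infty$; a complementary Gibbs-inequality (nonnegativity of $\KL$) argument gives the matching lower bound $L^\star \geq \ent(M)$, confirming finiteness.

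For the second stage—existence—I would work with length functions directly: enumerate the wordforms as $\bw_1, \bw_2, \dots$ and take a minimizing sequence $\ell^{(k)}$ of Kraft-feasible length assignments with $\sum_{\bw} p(\bw)\,\ell^{(k)}(\bw) \to L^\star$. The key observation is that finiteness of $L^\star$ forces each coordinate to be bounded: for any $\bw$ with $p(\bw)>0$ we have $p(\bw)\,\ell^{(k)}(\bw) \leq \sum_{\bw'} p(\bw')\,\ell^{(k)}(\bw')$, which stays bounded, so $\ell^{(k)}(\bw)$ ranges over a finite set of integers. A diagonal extraction then yields a subsequence along which every coordinate converges—indeed is eventually constant—to an integer length $\ell^\star(\bw)$. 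Applying Fatou's lemma to the Kraft sums shows $\ell^\star$ remains Kraft-feasible, and applying Fatou to the cost shows $\sum_{\bw} p(\bw)\,\ell^\star(\bw) \leq L^\star$; since $L^\star$ is infimal, this is an equality. The prefix-free code realizing $\ell^\star$ is therefore optimal, and we take $\optcode$ to be it; its cost is $L^\star < \ent(M)+1$, completing the argument.

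I expect the main obstacle to be this second stage: unlike the finite-support case, where optimality is immediate from a Huffman construction, for infinite support the infimum over codes need not a priori be attained, and the whole argument hinges on converting the finite-entropy hypothesis into the per-coordinate boundedness that powers the compactness/diagonalization step. Care is also needed to justify interchanging limits with the infinite Kraft and cost sums, for which the nonnegative-term structure and Fatou's lemma are the natural tools.
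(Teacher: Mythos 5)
Your proof is correct, and it is in essence the argument of the very reference the paper leans on: the paper offers no proof of its own for this proposition, writing only ``See \citet{linder1997existence},'' and your two stages---the Shannon code $\ell(\bw) = \lceil \log_{|\Sigma|} \tfrac{1}{p(\bw)} \rceil$ with the countable Kraft converse to get the bound $< \ent(M)+1$, followed by a minimizing sequence of Kraft-feasible length functions, per-coordinate boundedness, diagonal extraction, and Fatou to show the infimum is attained---reconstruct precisely how that cited work establishes existence of an optimal prefix code for an infinite alphabet with finite entropy. The only point worth noting is that your per-coordinate bound $\ell^{(k)}(\bw) \leq \cost(\ell^{(k)})/p(\bw)$ requires $p(\bw) > 0$, which is harmless in the paper's setting since the softmax-based language model places strictly positive mass on every $\bw \in \Sigma^*$.
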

\begin{proof}
See \citet{linder1997existence}.
\end{proof}
\noindent Luckily, under a weak assumption, this is the case for a well-trained language model.

\begin{mydef}
Language model $p(\bw)$ is $\varepsilon$-smooth if for all histories $\vh \in \Sigma^*$ we have $p(\STOP \mid \vh) \geq \varepsilon$.\footnote{Under this assumption our language model is also consistent, as defined by \citet{welleck-etal-2020-consistency}---sequences with infinite length have asymptotically zero probability mass.}
\end{mydef}
\noindent This fairly weak assumption states that partial wordforms have a lowerbound on their probability of ending. 
As such, there is an upperbound on the probability of a wordform which decreases exponentially with its length.
Armed with this assumption, we can now show that any $\varepsilon$-smooth language model has a finite entropy. 
\begin{prop} \label{prop:bound_entropy}
If a language model $p(\bw)$ is $\varepsilon$-smooth, then its entropy is finite, i.e.
$\ent(W) < \infty$.
\end{prop}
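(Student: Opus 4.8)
The plan is to separate the entropy of a wordform into the contribution of its \emph{length} and the contribution of its \emph{shape given that length}, and to show each piece is finite using the fact that $\varepsilon$-smoothness forces wordform length to have an exponentially decaying tail. Write $L = |W|$ for the (random) number of symbols in a wordform, which is a deterministic function of $W$. The first step is to establish, by induction on the chain-rule factorization $p(\bw) = \prod_i p(w_i \mid w_{<i})\, p(\STOP \mid \bw)$, the tail bound
\begin{equation}
  p(L \geq \ell) \leq (1-\varepsilon)^{\ell}.
\end{equation}
The idea is that, conditioned on any prefix, the probability of \emph{continuing} (not emitting $\STOP$) is at most $1-\varepsilon$, so these failures compound multiplicatively. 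Summing the tail then gives a finite expected length,
\begin{equation}
  \mathbb{E}[L] = \sum_{\ell=1}^{\infty} p(L \geq \ell) \leq \sum_{\ell=1}^{\infty}(1-\varepsilon)^{\ell} = \frac{1-\varepsilon}{\varepsilon} < \infty,
\end{equation}
which is the quantitative engine of the whole argument.

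Next I would invoke the chain rule for entropy. Since $L$ is determined by $W$, we have $\ent(W,L) = \ent(W)$, and decomposing the joint the other way yields
\begin{equation}
  \ent(W) = \ent(L) + \ent(W \mid L),
\end{equation}
so it suffices to bound each term. For the conditional term, conditioned on $L = \ell$ the variable $W$ is supported on strings of length $\ell$, of which there are at most $|\Sigma|^{\ell}$; since the entropy of any distribution on a finite set is at most the log of its cardinality, $\ent(W \mid L = \ell) \leq \log_{|\Sigma|}|\Sigma|^{\ell} = \ell$, and taking the expectation over $L$ gives $\ent(W \mid L) \leq \mathbb{E}[L] < \infty$. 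For the length term, $L$ is a distribution on the non-negative integers with finite mean, and because the geometric distribution maximizes entropy among all such distributions with a prescribed mean, $\ent(L)$ is bounded by the (finite) entropy of the geometric distribution with mean $\mathbb{E}[L]$. Combining the two bounds gives $\ent(W) < \infty$.

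The main obstacle is the first step: carefully deriving $p(L \geq \ell) \leq (1-\varepsilon)^\ell$ from the per-history guarantee $p(\STOP \mid \vh) \geq \varepsilon$. One must phrase ``length at least $\ell$'' as the event that the model consecutively declines to emit $\STOP$ at each of the first $\ell$ prefixes, and telescope the chain-rule factorization so that each factor contributes a multiplicative $(1-\varepsilon)$, keeping the bookkeeping of the end-of-word symbol exact. Everything afterward (the cardinality bound on $\ent(W \mid L)$ and the maximum-entropy bound on $\ent(L)$) is routine.

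As a slicker alternative that avoids the maximum-entropy fact for $\ent(L)$ altogether, I could bound the entropy by a cross-entropy: for any reference distribution $q$, Gibbs' inequality (equivalently $\KL(p \,\|\, q) \geq 0$) gives $\ent(W) \leq \mathbb{E}_p[\log_{|\Sigma|} 1/q(W)]$. Choosing the normalizable $q(\bw) \propto (1-\varepsilon)^{|\bw|}\,|\Sigma|^{-|\bw|}$ makes $\log_{|\Sigma|} 1/q(\bw)$ an affine function of $L$, so finiteness of the bound again reduces to $\mathbb{E}[L] < \infty$, which the same tail estimate supplies.
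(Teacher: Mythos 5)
Your proof is correct, but it takes a genuinely different route from the paper's. The paper peels off the \emph{first character}: it writes $\ent(W) = \ent(W_1) + \ent(W_{>1} \mid W_1)$, bounds $\ent(W_1) \le \log|\Sigma|$, argues that the second term is at most $(1-\varepsilon)\ent(W)$ because the word terminates immediately with probability at least $\varepsilon$, and then solves the recursion $\ent(W) \le \log|\Sigma| + (1-\varepsilon)\ent(W)$ to get the closed-form bound $\ent(W) \le \tfrac{1}{\varepsilon}\log|\Sigma|$. You instead decompose by \emph{length}: $\ent(W) = \ent(L) + \ent(W \mid L)$ with $L = |W|$, derive the tail bound $p(L \ge \ell) \le (1-\varepsilon)^{\ell}$, and reduce everything to $\mathbb{E}[L] < \infty$ --- the conditional term via the cardinality bound $\ent(W \mid L = \ell) \le \ell$, the length term via the maximum-entropy property of the geometric distribution (or both at once, in your alternative, via Gibbs' inequality against the reference measure $q(\bw) \propto ((1-\varepsilon)/|\Sigma|)^{|\bw|}$). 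Both arguments run on the same engine --- compounding $(1-\varepsilon)$ factors from the stopping bound --- but your version buys rigor that the paper's proof lacks at two points: the paper's pointwise step $\ent(W_{>1} \mid W_1 = \bw_1) \le \ent(W_{>1})$ is not valid in general (conditioning on a \emph{specific value} can increase entropy; only the average over values is guaranteed not to), and rearranging $\ent(W) \le \log|\Sigma| + (1-\varepsilon)\ent(W)$ into $\ent(W) \le \tfrac{1}{\varepsilon}\log|\Sigma|$ requires subtracting $(1-\varepsilon)\ent(W)$ from both sides, which presupposes the very finiteness being proved (the recursion is vacuous if $\ent(W) = \infty$). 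Your argument establishes finiteness from scratch, at the cost of invoking the standard geometric max-entropy fact; the paper's, read charitably as a fixed-point computation, yields a cleaner explicit constant. Your cross-entropy variant arguably gives the best of both: an explicit finite bound, affine in $\mathbb{E}[L]$, with no circularity.
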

\begin{proof}
See \cref{app:proof_entropy}.
\end{proof}

\noindent Safe-guarded by \cref{prop:bound_cost,prop:bound_entropy}, we now train a model to capture the unigram distribution. 
We will then use this model to estimate the code-length of an optimal lexicon.

\section{Modeling the Unigram Distribution and its Challenges}

\citepossessive{zipf1935psycho} %
law states that the frequency of a word in a corpus is inversely proportional to its rank, resulting in a power-law distribution where a small subset of the words dominate the corpus.
As such, na{\"i}vely training a character-level model on a language's tokens (i.e. predicting non-contextual wordforms with their natural corpus frequencies) 
would be unlikely to capture morphological regularities \citep{Goldwater2011}. 
Furthermore, it would burden the model to learn
a mostly arbitrary assignment between form and frequency. 
As an example, the English verb \word{make} is much more common than the nouns \word{cake} and \word{lake}, even if graphotactically they may be equally probable.

A closer inspection of English shows that most frequent words tend to come from closed lexical classes including articles, pronouns, prepositions, and auxiliaries, such as \textit{the}, \textit{of}, \textit{it} and \textit{be} \cite{sinclair1999way}. These words tend to be short and manifest fossilized graphotactics (and phonotactics) as well as a more abundant prevalence of otherwise rare segments, such as the voiced and voiceless dental fricatives (orthographically expressed with \word{th}).
These rare segments would be overrepresented in such a na\"ive training regime, making it hard for the character-level model to correctly represent the language's graphotactics.

In order to address the problem of skewed frequencies, we use a novel neuralization of \citepossessive{Goldwater2011} two-stage model to capture the unigram distribution.
This model consists of two components: a wordform \defn{generator} and a token frequency \defn{adaptor}.
The generator is a character-level model which produces wordforms, for which we use an LSTM;\footnote{LSTMs have been shown to be able to model phonotactics well by \newcite{pimentel-etal-2020-phonotactics}, and so we expect them to also work well with graphotactics.} this model should place similar probability mass on graphotactically ``good'' wordforms, such as \word{make}, \word{cake}, and \word{lake}.
Meanwhile, the adaptor sets the frequency with which these wordforms will appear as tokens.
Following \citeauthor{Goldwater2011}, we base our adaptor on the Pitman--Yor Chinese restaurant process \cite[PYCRP;][]{Pitman1997}, which allows the adaptor to model a power-law distribution; this model is then responsible for capturing the fact that \word{make} is a more frequent token than \word{cake}, and \word{lake}.
\subsection{A Two-stage Model}

The generative process of our two-stage model is presented graphically in \cref{fig:two_stage_model}.
Our generator is a character-level LSTM language model, which generates a 
potentially infinite number of i.i.d. wordforms $\{\bl_k\}_{k=1}^K$.
Independently, the PYCRP adaptor assigns each observed token in a dataset to a cluster $\{z_n\}_{n=1}^N$.
In the literature, the value of $z_n$ is
the ``table assignmment'' of the $n^{\text{th}}$ token.
These clusters are then used as lookup indices to the wordforms, producing the observed word tokens $\{\bw_n\}_{n=1}^N$ where $\bw_n=\bl_{z_n}$.
In general $N \gg K$, so tokens with the same wordform are grouped in few clusters. 
In this way, the adaptor sets the frequency with which wordforms appear as tokens in a corpus by defining each cluster's probability.\looseness=-1

\paragraph{Generating Wordforms.}
As mentioned above, wordforms are sampled i.i.d. from a distribution $\pgen$ over strings defined by the generator.
Specifically, this distribution over forms is defined as follows:
\begin{equation} \label{eq:wordform_prob}
    \pgen (\bl) = \prod_{t=1}^{|\bl|} \pgen(\ell_t \mid \bl_{<t}) 
\end{equation}
where $\bl$ is a vector of characters forming a word and $\ell_t$ is its $t^{\text{th}}$ character.\footnote{We note two subscripts are used here: $k$ refers to the $k^{\text{th}}$ wordform, while $t$ indexes the $t^{\text{th}}$ character in the wordform.}
Each of these characters is encoded with a lookup vector, producing representations $\mathbf{e}_t \in \mathbb{R}^{d_1}$ where $d_1$ is the embedding size.
These embeddings are then used as input to an LSTM \cite{hochreiter1997long},
producing the representations $\bh_t \in \mathbb{R}^{d_2}$, where $d_2$ is the size of the LSTM's hidden layer.
The LSTM output is further used to obtain the distribution over potential characters:
\begin{align} \label{eq:lstm_prob}
    \pgen&(\ell_t \mid \bl_{<t})  = \softmax(\mathbf{W}\,\bh_t + \bb)
\end{align}
In this equation, both $\mathbf{W} \in \mathbb{R}^{|\Sigma| \times d_2}$ and $\bb \in \mathbb{R}^{|\Sigma|}$ are learnable parameters and the zero vector is used as the initial hidden state $\bh_0$.
The distribution $\pgen$, representing the generator, is then used to generate the 
set of wordforms $\{\bl_k\}_{k=1}^K$, which is expected to represent the graphotactics and morphology of the language. Notedly, these wordforms do not explicitly capture any notion of token frequency.\footnote{This generative process allows the same wordform to be sampled multiple times, as they are generated 
i.i.d.
}

\begin{figure}
    \centering
    \includegraphics[width=\columnwidth]{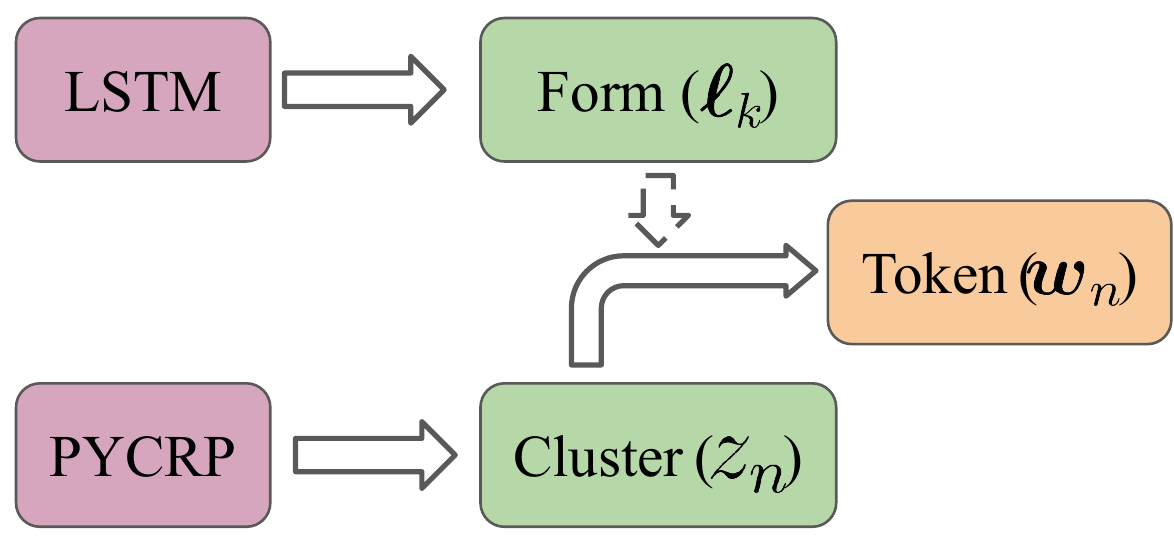}
    \caption{A diagram of the two-stage model. The LSTM generates wordforms ($\bl_k$).
    The PYCRP samples cluster assignments ($z_n$). Cluster assignments are then used to lookup a form for each token ($\bw_n=\bl_{z_n}$).
    In this Figure, models are in {\color{modelpink} \textbf{magenta}}, latent variables in {\color{modelgreen} \textbf{green}} and observed variable in {\color{modelorange} \textbf{orange}}.
    }
    \label{fig:two_stage_model}
    \vspace{-3pt}
\end{figure}

\paragraph{Adapting Word Frequencies.}
The adaptor is responsible for modeling the word frequencies, and it has no explicit notion of the wordforms themselves.
The PYCRP assigns each token $n$ to a cluster $z_n$. 
Each cluster $z_n$, in turn, has an associated wordform $\bl_{z_n}$, sampled from the generator. Consequently, all instances in a cluster share the same wordform. 
The probability of an instance $n$ being assigned to cluster $z_n$ is defined as follows:
\begin{align}
    p&(\clusterrv = \cluster \mid \,\otherclusters) \\
    &\propto 
    \begin{cases}
        \ntokens - a \, &  1 \leq z_n \leq \ntables \,\, {\color{gray}\textit{(old cluster)}} \\
        a \cdot \ntables + b \, & z_n = \ntables + 1 \,\, {\color{gray}\textit{(new cluster)}}
    \end{cases} \nonumber
\end{align}
In this equation, $\ntables$ is the current number of populated clusters; while $\ntokens$ is the number of instances currently assigned to cluster $z_n$.
The PYCRP has two hyperparameters: $0 \leq a < 1$ and $b \geq 0$. The parameter $a$ controls the rate in which the clusters grow \cite{Teh2006}, while $b$ controls an initial preference for dispersion.
Together, these ensure the formation of a long-tail---%
concocting a power-law distribution for the cluster frequencies. 
This property allows a cluster with wordform \word{make}, for example, to have an exponentially larger frequency than its graphotactic neighbor \word{cake}.

\paragraph{Modeling Word Tokens.}
Finally, given the set of wordforms and the cluster assignments, defining the form associated with a token is deterministic. 
Since each cluster only contains instances of one wordform, the form of a token is defined looking up the label of the cluster it was assigned to $\bl_{z_n}$:
\begin{align}
    p(W_n = \bw_n \mid \cluster, \> \bl) &= \mathbbm{1}\{\bw_n = \bl_{z_n}\}
\end{align}
This way, the adaptor captures the frequency information of the words in the corpus---whereas the generator can focus on learning the language's graphotactics and morphology.

\paragraph{Model training.} 
Unfortunately, we cannot directly infer the parameters of our model with a closed form solution. We thus use a solution akin to expectation maximization \cite{Wei1990}: We freeze our LSTM generator while learning the PYCRP parameters, and \emph{vice versa}. 
The PYCRP is trained using Gibbs sampling. 
For each token, we fix all cluster assignments $\mathbf{z}_{-n}$ except for one $z_n$. 
This cluster is then re-sampled from the marginal $p(\clusterrv = \cluster \mid \mathbf{z}_{-n}, \bl, \bw_n)$, where we have access to $\bw_n$ since it is an observed variable.
During this optimization small clusters may vanish, and new clusters $z_n=K+1$ (previously with no samples) may be created. 
This procedure, thus, may also produce new sets of wordforms $\{\bl_k\}_{k=1}^{K'}$, composed of the populated clusters' labels (where $K'$ is the new number of clusters).
We assume the distribution of these wordforms---which have dampened frequencies---to be more balanced than in the original full set of word tokens. 
The LSTM is trained using stochastic gradient descent, minimizing the cross-entropy of precisely this set of cluster's wordforms. 
As such, it is expected to be a more representative model of a language's graphotactics; the irregular common words are less dominant in this training set.
We give a longer explanation of our model training procedure, together with the used hyperparameters, in \cref{sec:model_training}.
\subsection{A More Intuitive Explanation}

Despite its slightly odd formulation, the two-stage model has an intuitive interpretation. Once we have learned (and fixed) its parameters,
we obtain the marginal probability of a wordform as:
\begin{align} \label{eq:interpolation_unigram}
    &p(\bw)=   \\
    &\underbrace{\frac{ c_\bw - \overbrace{n_\bw \cdot a}^{\text{smoothing factor}}}{|\bz| + b}}_{\text{smoothed unigram frequencies}} + \underbrace{\frac{(a \cdot K + b)}{|\bz| + b}}_{\text{interpolation weight}} \cdot \underbrace{\pgen(\bw)}_{\text{LSTM}} \nonumber
\end{align}
In this equation, $c_\bw$ is the count of tokens with form $\bw$ in the training set, while $n_\bw$ is the number of distinct clusters with this same form.
The model interpolates between a smoothed unigram corpus frequency and the probability an LSTM gives the analyzed wordform. 
This interpolation enables the model to place a non-zero probability mass on all possible wordforms---thus modeling an open vocabulary and having infinite support---while also placing a large probability mass on frequent wordforms.
Furthermore, the smoothing factors per word type, together with the interpolation weight, are 
holistically learned by the PYCRP model using the training set.\footnote{Our model consistently produced lower cross-entropies (on held out tokens) to the ones of an LSTM baseline na\"ively trained on a language's tokens.}
\section{Experimental Setup}

\subsection{Evaluation}

The value in which we are interested in this work is the expected cost of a code, given in \cref{eq:cost_code}.
We can easily estimate this value for a natural code by using its sample estimate:
\begin{equation} \label{eq:code_nat}
    \cost(\ncode) \approx \frac{1}{N} \sum_{n=1}^N |\ncode(m_n)| = \frac{1}{N} \sum_{n=1}^N |\bw_n|
\end{equation}
For an optimal code, we can upperbound it using the entropy of the distribution, while the entropy itself can be upperbounded by the cross-entropy of a model on it.
We can compute this upperbound with a sample estimate of the cross-entropy:
\begin{align} \label{eq:code_opt_old}
    \cost(\optcode)
    &\le \ent(W) + 1 \le \ent_\theta(W) + 1 \\
    &\lesssim \frac{1}{N} \sum_{n=1}^N \log_{|\Sigma| } \frac{1}{p_\theta(\bw_n)} + 1 \nonumber
\end{align}
In practice, we get a tighter estimate by using the \citet{shannon1948mathematical} code's lengths directly:
\begin{align} \label{eq:code_opt}
    \cost(\optcode)
    \lesssim \frac{1}{N} \sum_{n=1}^N \left\lceil \log_{|\Sigma| } \frac{1}{p_\theta(\bw_n)} \right\rceil
\end{align}
where $\lceil \cdot \rceil$ is the ceiling operation.

\subsection{Morphological Constraints}

As mentioned in \cref{sec:optimality}, we use Morfessor \citep{smit-etal-2014-morfessor} to tokenize our corpus into morphological units. 
Morfessor is a method for finding morphological segmentations from raw text data. 
As an unsupervised model, Morfessor is inherently noisy, but we take it as a  proxy for a language's morphological segmentation. 
To compare the robustness of our results across different unsupervised segmentation algorithms, though, we also run our experiments using byte pair encoding \citep[BPE;][]{gage1994new,sennrich-etal-2016-neural} and WordPieces~\citep{schuster2012japanese}.
\looseness=-1

We train Morfessor on all pre-tokenized sentences in our language-specific Wikipedia corpus (described in \cref{sec:dataset}). With this pre-trained model in hand, we tokenize all words in our training, development and test sets. We get a set of morpheme tokens $\{\vu_{n,j}\}_{j=1}^{J_n}$ for each word $\vw_n$, where this word is split into $J_n$ morphological units.\looseness=-1

We can now get the optimal length of a morphologically constrained code. With this in mind, we first train a fresh version of our two-stage model on the full set of morphological unit tokens---i.e. $\{\vu_{n,j} \mid n \le N, j \le J_n\}$, as opposed to the set of full word tokens, $\{\vw_n\}_{n=1}^N$.
We estimate the length of this code with the following equation:
\begin{align} \label{eq:code_morph}
    \cost(\morphcode) 
    &= \sum_{m \in \calM} p(m) \left|\morphcode(m)\right| \\
    &\lesssim \frac{1}{N} \sum_{n=1}^N \sum_{j=1}^{J_n} \left\lceil \log_{|\Sigma| } \frac{1}{p_\theta(\vu_{n,j})} \right\rceil \nonumber
\end{align}
Note that this cost estimate is still the average code-length per word token, as such we take the expectation over the meanings distribution. 
Each word's code-length, though, is now defined as the sum of the length of each of its constituent morphemes.

\subsection{Graphotactic Constraints}
The second linguistic constraint we would like to impose on our codes is graphotactic well-formedness---i.e. we wish our code to be composed only by sequences of characters that comply with the regularities observed in the language, such as e.g. vowel harmony, syllable structure, or word-initial and word-final constraints. 
We use our generator LSTM for this. 
As mentioned before, this model is trained on wordforms with dampened frequencies---we thus expect it to learn a language's graphotactic patterns above a minimum quality threshold.
We use this character-level model to sample (without replacement) as many unique wordforms as there are word types in that language (see \cref{tab:dataset_types} in \cref{sec:dataset_sizes}).%
\footnote{
Unfortunately, our LSTMs use a softmax non-linearity to assign probabilities and, as such, can't produce zeros.
Furthermore, due to the compositional nature of wordform probabilities (see \cref{eq:wordform_prob}), short implausible forms may have larger probability mass than long plausible ones. To mitigate this effect, when sampling wordforms we impose a minimum threshold of $0.01$ on each transition probability $p(\ell_t \mid \bl_{<t})$.
}
We assign each of these sampled wordforms $\vw'_n$, ordered by word length, to one of the languages meanings $m_n$, inversely ordered by unigram probability, i.e. $\graphcode(m_n)=\vw'_n$---thus generating an optimally Zipfian frequency--length correlation. With these assignments, we estimate the cost of a graphotactically constrained code:
\begin{align} \label{eq:code_graph}
    \cost(\graphcode) 
    \approx \frac{1}{N} \sum_{n=1}^{N} |\vw'_n|
\end{align}
Analogously, with the generator trained on morpheme units we get an optimal code under both morphological and graphotactic constraints.
\begin{align} \label{eq:code_morph_graph}
    \cost(\morphgraphcode) \approx \frac{1}{N} \sum_{n=1}^{N} \sum_{j=1}^{J_n} |\vu'_{n,j}|
\end{align}

\begin{figure*}
    \centering
    \includegraphics[width=\textwidth]{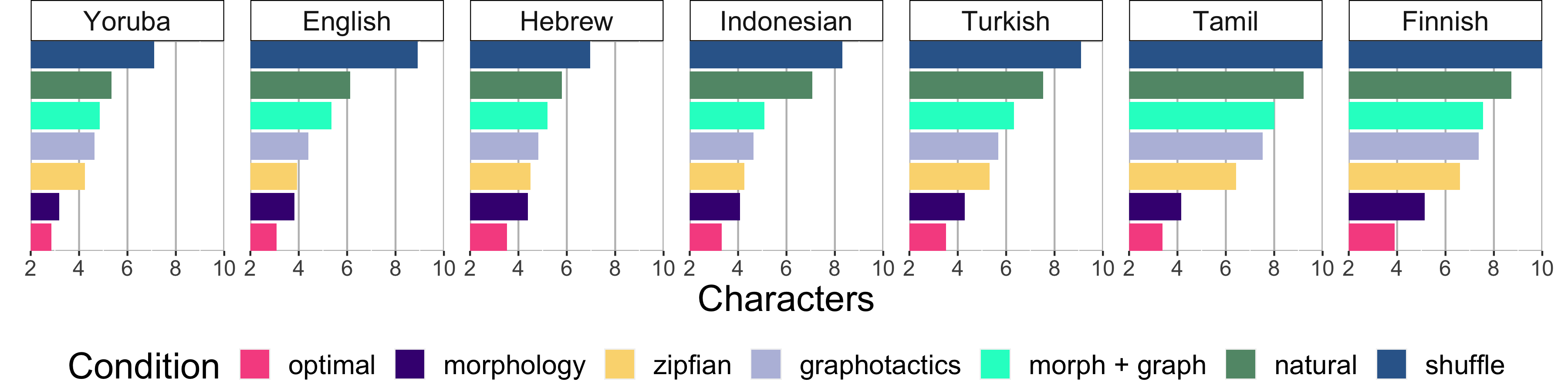}
    \vspace{-18pt}
    \caption{Bar plots of the code lengths under different constraints. In this plot, morphology is constrained through the use of Morfessor segmentation. Length in the shuffle condition for Tamil and Finnish exceed the scale (11.5 and 11 respectively).
    {\color{codec0} \textbf{Optimal}}, {\color{codec1} \textbf{Morph}}, {\color{codec2} \textbf{Zipfian}}, {\color{codec3} \textbf{Graph}}, {\color{codec4} \textbf{Morph + Graph}}, {\color{codec5} \textbf{Natural}}, {\color{codec6} \textbf{Shuffle}}.
    }
    \label{fig:bar_plot}
    \vspace{-5pt}
\end{figure*}

\subsection{Dataset} \label{sec:dataset}

We use Wikipedia data in our experiments. The data is preprocessed by first splitting it into sentences and then into tokens using SpaCy's language-specific sentencizer and tokenizer \cite{spacy2020}.
After this, all punctuation is removed and the words are lower-cased.
We subsample (without replacement) one million sentences of each language for our experiments, due to computational constraints. 
We then use an 80-10-10 split for our training, validation and test sets.

We choose typologically diverse languages for our experiments, each from a different language family: English, Finnish, Hebrew, Indonesian, Tamil, Turkish and Yoruba.\footnote{Dataset statistics are presented in \cref{sec:dataset_sizes}.} 
These languages vary in their graphotactic tendencies and morphological complexity.
In order to improve our data quality, we hand-defined an alphabet for each language and filter sentences with them, only considering sentences consisting exclusively of valid characters.\footnote{We define these sets of valid characters based on Wikipedia entries for the languages and the alphabets available in \url{https://r12a.github.io/app-charuse/}.}

\subsection{Summary}

In this paper we consider the following codes:

\vspace{-3pt}
\paragraph{\color{codec0} Optimal.} An information-theoretically optimal code under our two-stage model, estimated as defined by \cref{eq:code_opt}. This is our most compressed code and does not include either morphlogical or graphotactic contraints.

\vspace{-3pt}
\paragraph{\color{codec1} Morph.} A morphologically constrained code, as defined by \cref{eq:code_morph}. 

\vspace{-3pt}
\paragraph{\color{codec3} Graph.} A code constrained by graphotactics, as defined by \cref{eq:code_graph}. 

\vspace{-3pt}
\paragraph{\color{codec4} Morph+Graph.} A code constrained by both morphology and graphotactics; defined by \cref{eq:code_morph_graph}.

\vspace{-3pt}
\paragraph{\color{codec5} Natural.} The natural code---equivalent to the average token length and defined by \cref{eq:code_nat}. This is the code length actually observed in our corpora.

\vspace{-3pt}
\paragraph{\color{codec2} Zipfian.} A code estimated by re-pairing wordforms with meanings based on their frequencies; we then compute \cref{eq:code_nat} in this new code. This would be equivalent to the natural code length if lexicons had a perfect word length--frequency correlation (i.e., a Spearman's rank correlation of 1).

\vspace{-3pt}
\paragraph{\color{codec6} Shuffle.} A code estimated by randomly re-pairing wordforms with meanings and computing \cref{eq:code_nat} in this new code. This would be equivalent to the natural code length if Zipf's law of abbreviation did not exist, i.e. lexicons had no word length--frequency correlation.

\section{Results}

\begin{table*}[t]
    \centering
\resizebox{\textwidth}{!}{%
    \begin{tabular}{lccccccccccc}\toprule
        & &\multicolumn{3}{c}{{\color{codec1} \textbf{Morph}}} &  & \multicolumn{3}{c}{{\color{codec4} \textbf{Morph + Graph}}} \\ \cmidrule{3-5} \cmidrule{7-9}
        Language & {\color{codec0} \textbf{Optimal}} & {\color{codec1} \textbf{Morfessor}} & BPE & WordPieces & {\color{codec3} \textbf{Graph}} & {\color{codec4} \textbf{Morfessor}} & BPE & WordPieces & {\color{codec2} \textbf{Zipfian}} & {\color{codec5} \textbf{Natural}} & {\color{codec6} \textbf{Shuffle}} \\
         \midrule
English & 3.09 & 3.82 & 3.34 & 3.31 & 4.39 & 5.34 & 4.67 & 4.70 & 3.93 & 6.11 & 8.91 \\
Finnish & 3.89 & 5.13 & 4.94 & 4.95 & 7.37 & 7.55 & 7.60 & 7.65 & 6.59 & 8.72 & 10.97 \\
Hebrew & 3.52 & 4.38 & 3.98 & 3.99 & 4.82 & 5.19 & 4.95 & 4.88 & 4.50 & 5.79 & 6.97 \\
Indonesian & 3.31 & 4.08 & 3.67 & 3.66 & 4.63 & 5.08 & 5.02 & 4.95 & 4.25 & 7.06 & 8.30 \\
Tamil & 3.38 & 4.15 & 4.07 & 4.01 & 7.52 & 8.01 & 8.16 & 8.22 & 6.41 & 9.21 & 11.48 \\
Turkish & 3.52 & 4.28 & 4.12 & 4.03 & 5.67 & 6.31 & 5.98 & 5.93 & 5.31 & 7.52 & 9.09 \\
Yoruba & 2.84 & 3.18 & 3.00 & 2.97 & 4.63 & 4.85 & 4.69 & 4.61 & 4.24 & 5.34 & 7.10 \\
        \bottomrule
    \end{tabular}
}
\vspace{-5pt}
    \caption{
    The average code lengths under the different coding schemes. 
    }
    \label{tab:avg_code_lengths}
    \vspace{-8pt}
\end{table*}

The average length for each considered code is presented in \cref{fig:bar_plot} and \cref{tab:avg_code_lengths}.
As expected, we find that the average code length across natural languages is shorter than the shuffle condition and longer than the optimal condition. 
Interestingly, the codes produced by the other conditions investigated here also have the same identical order across all analyzed languages.
\looseness=-1

Adding morphological constraints on the code incurs no more than one extra character over the optimal condition---except for Finnish, for which the cost of morphology is slightly above one character.
Notably, the use of unsupervised morphological segmentation may introduce some noise into our measurements.
Consistently with our expectations, though, Yoruba (a morphologically poor language) pays the smallest cost for its morphology, while Finnish (a morphologically rich one) pays the largest.\looseness=-1

BPE and WordPiece systematically produce shorter codes than Morfessor. This is sensible, since the first two would keep most frequent wordforms intact, generating a unique code for each of them.
This would lead to codes in which the morphological productivity of frequent and infrequent words differ, amplifying frequency effects encountered in natural languages \cite{lieberman2007quantifying}.\looseness=-1

The graphotactic condition yields systematically longer codes than the morphological one, although here there are important differences between languages: English, Hebrew and Indonesian have similar code lengths for both code constraints; in the other languages the graphotactic code is substantially longer than the morphological one. 

In all cases, the natural code is longer than the one with both graphotactic and morphological constraints---suggesting languages are not optimally compressed, even when accounting for these constraints.
That said, all of the natural languages are considerably more compressed than a lexicon produced by randomly reassigning wordforms.

\begin{figure}
    \centering
    \includegraphics[width=\columnwidth]{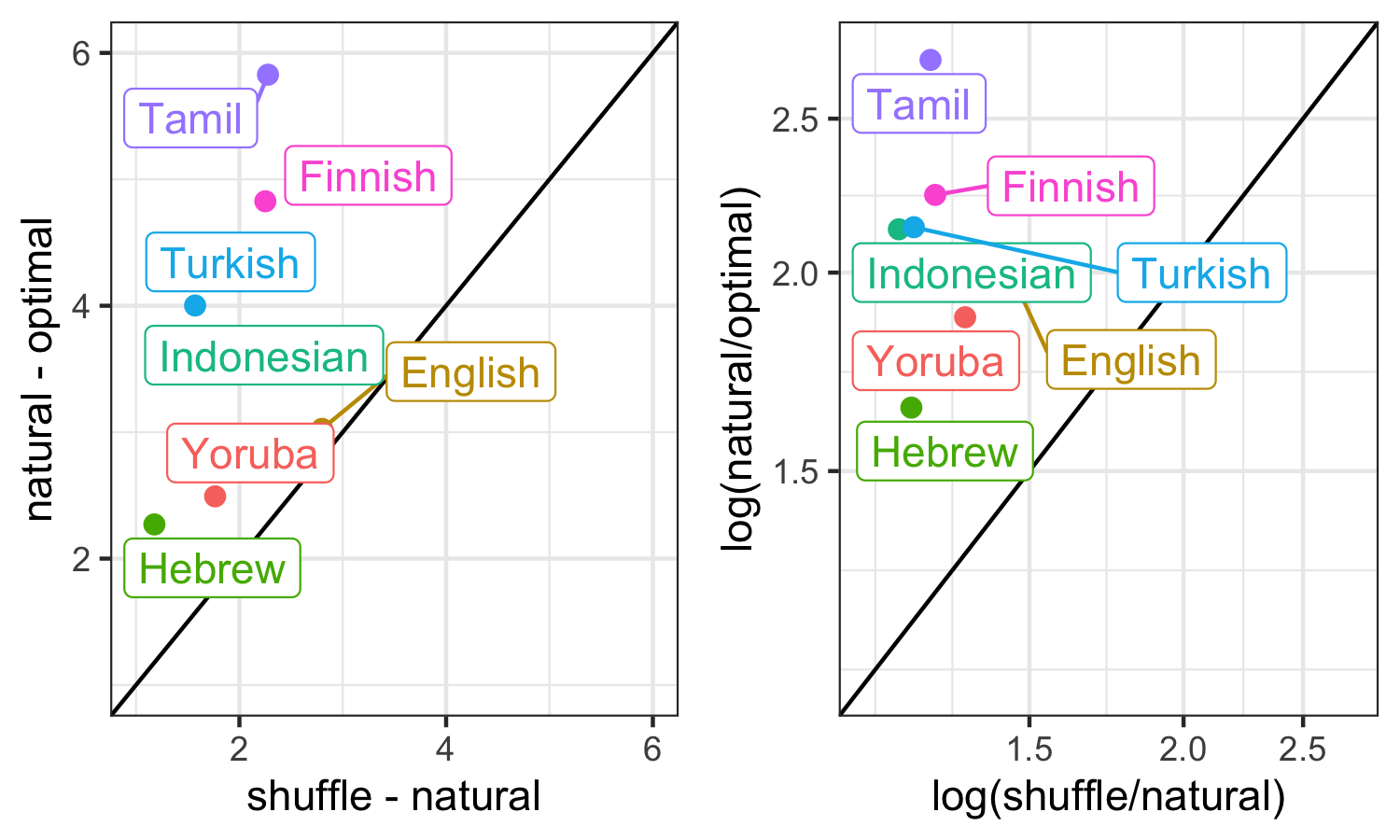}
    \vspace{-20pt}
    \caption{Comparison of the distances---additive (left) and multiplicative (right)---between natural languages and either optimal or shuffled baselines.}
    \label{fig:differences}
    \vspace{-5pt}
\end{figure}

\section{Discussion and Conclusion}

In this paper, we introduced a model-based strategy to assess the relative contribution of different constraints on word (code) length at large. 
In particular, we evaluated how much natural languages differ from systems optimized for Zipf's law of abbreviation. 
Our proposed model improves upon an old method used to consider the efficiency of the lexicon: \defn{random typing models} \citep{miller1957some,moscoso2013missing,ferrer2020optimal}.
\citeauthor{miller1957some} introduced the idea of monkeys typing randomly on a keyboard and analyzed the properties of its resulting language.
The monkeys' text, however, has no morphological or graphotactic constraints \citep[but see][]{caplan2020miller} and does not follow a language's unigram distribution \citep{howes1968zipf}.
As such, it cannot directly encode the same meanings or messages as the original language.

Our results show that, while natural languages do tend to map frequent messages to shorter words, the magnitude of this effect varies widely across our set of  diverse languages. Notably, the distance between natural languages and the optimal codes is larger than the distance between natural languages and their corresponding shuffled code (see \cref{fig:differences}). In other words, natural codes are closer to \emph{not} being optimized (in the Zipfian sense) than to being maximally compressed.

That said, our morphological and graphotactic baselines, when combined, yield codes that display mean code lengths that are (in most cases) closer to the natural code than to the optimal (see \cref{fig:relative}). If our models are indeed able to capture the true patterns in our data, then this means that (compositional) morphology and graphotactics, along with the law of abbreviation, are sufficient to account for most of the length of natural codes---as observed in real languages. 
Graphotactic (primarily) and morphological constraints are enough to derive a code with a similar complexity to that of natural languages, which suggests the other factors discussed above (associated with, e.g., surprisal and non-arbitrary form-meaning mappings) likely play a more modest role in pushing natural languages away from the optimal Zipfian code.

\begin{figure}
    \centering
    \includegraphics[width=\columnwidth]{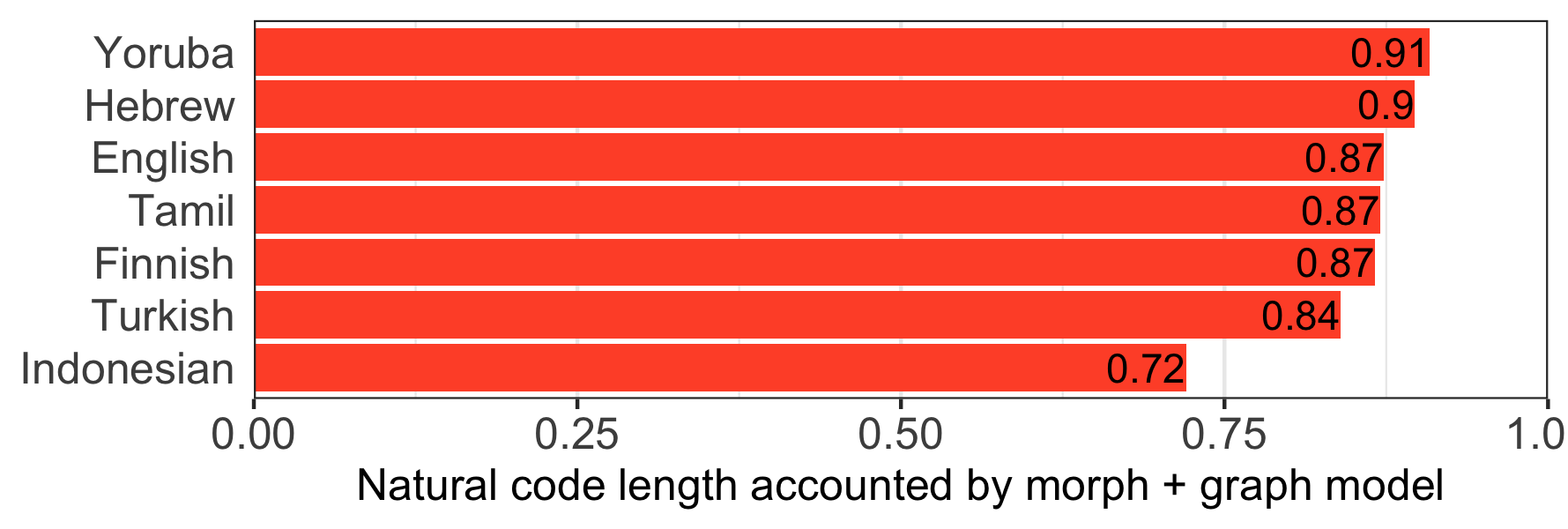}
    \vspace{-17pt}
    \caption{Fraction of code length accounted for by the combined morphology and graphotactics model}
    \label{fig:relative}
    \vspace{-5pt}
\end{figure}

The optimality of the lexicon occupies a major place in the scientific study of the structure and functional evolution of languages \citep{bentz2016zipf,gibson2019efficiency,mahowald2020efficient}. We hope that the method presented here---which allows for a more precise quantification of the (non-)optimality of lexicons---will be used to further the goal of understanding why languages are structured in the ways that they are, while offering insight into the functional tradeoffs that underlie language variation and change.

\section*{Ethical Concerns}

This paper concerns itself with investigating lexicons' optimality under the perspective of Zipf's Law  of Abbreviation.
As we focus on computational linguistic experiments, we see no clear ethical concerns here.
Nonetheless, we note that Wikipedia (from where we collect data) is not a fully representative source of a language's data---the biases in the data will likely also be present in our results.

\section*{Acknowledgements}

Dami\'{a}n E. Blasi acknowledges funding from the Branco Weiss Fellowship, administered by the ETH Z\"{u}rich.
Dami\'{a}n E. Blasi's research was also executed within the framework of the HSE University Basic Research Program and funded by the Russian Academic Excellence Project `5-100'.

\bibliography{acl2020}
\bibliographystyle{acl_natbib}

\appendix

\section*{Appendix}
\section{Dataset sizes} \label{sec:dataset_sizes}

In this section, we present the number of word tokens (\cref{tab:dataset_tokens}) and word types (\cref{tab:dataset_types}) in our analyzed datasets.

\begin{table}[ht]
    \centering
\resizebox{\columnwidth}{!}{%
    \begin{tabular}{lrrr}
        \toprule
        & \multicolumn{1}{c}{Train} & \multicolumn{1}{c}{Validation} & \multicolumn{1}{c}{Test} \\
        \midrule
        English & 4,630,371 & 578,510 & 578,796 \\
        Finnish & 2,558,634 & 319,546 & 320,716 \\
        Hebrew & 4,911,953 & 613,457 & 609,864 \\
        Indonesian & 4,039,552 & 506,085 & 507,587 \\
        Tamil & 3,286,075 & 412,776 & 412,416 \\
        Turkish & 2,676,471 & 333,120 & 332,359 \\
        Yoruba & 373,517 & 46,415 & 46,283 \\
        \bottomrule
    \end{tabular}
}
    \caption{The number of word tokens used in training, validation and testing.}
    \label{tab:dataset_tokens}
\end{table}

\begin{table}[ht]
    \centering
\resizebox{\columnwidth}{!}{%
    \begin{tabular}{lrrr}
        \toprule
        & \multicolumn{1}{c}{Train} & \multicolumn{1}{c}{Validation} & \multicolumn{1}{c}{Test} \\
        \midrule
        English & 242,030 & 66,668 & 66,243 \\
        Finnish & 466,745 & 109,232 & 110,378 \\
        Hebrew & 311,860 & 104,555 & 104,478 \\
        Indonesian & 243,118 & 69,792 & 70,079 \\
        Tamil & 479,668 & 116,196 & 115,422 \\
        Turkish & 308,419 & 84,300 & 83,871 \\
        Yoruba & 47,740 & 12,877 & 12,877 \\
        \bottomrule
    \end{tabular}
}
    \caption{The number of word types used in training, validation and testing.}
    \label{tab:dataset_types}
    \vspace{-5pt}
\end{table}

\section{Model Training} \label{sec:model_training}

As mentioned in the main text, we cannot directly infer the parameters of our model and we use a solution similar to expectation maximization \cite{Wei1990}. 
We freeze our LSTM generator while learning the PYCRP parameters and then freeze the PYCRP to train the LSTM model.

\paragraph{Expectation step.} This step uses a Gibbs sampling procedure to estimate the parameters of the PYCRP. 
For each token in our dataset, we fix all cluster assignments $\bz_{-n}$ except for the given token's one $z_n$. We then re-sample this token's cluster based on the marginal probability $p(z_n | \bz_{-n}, \bl, \bw_n)$.
We do this for 5 epochs, and use the assignments which result in the best development set cross-entropy.
This process can both remove clusters and create new ones by replacing tokens. 
The set of populated clusters (together with their wordform labels) then allows creating a new wordform dataset of size $K'$, where the distribution of the token frequencies is expected to be less skewed. 
In practice, this wordform dataset is thus created from the resulting set of cluster labels $\{\bl_k\}_{k=1}^{K'}$, i.e. a word will appear in this new dataset as many times as it was assigned as a cluster label.

\paragraph{Maximization step.} 
We use the set of populated cluster labels 
to train the generator LSTM---assuming 
that this allows learning a more representative model of a language's graphotactics as the irregular common words are less dominant in its training set.
In other words, at each epoch, the generator will be trained in a wordform as many times as it has been assigned as a cluster label.

\paragraph{Hyperparameters and implementation details.}
For the PYCRP, we fix hyper-parameters $a=0.5$ and $b=10{,}000$, and we use the optimized Gibbs sampling algorithm designed by \citet{blunsom-etal-2009-note}.
As our generator, we use a three layers LSTM with an embedding size of 128, a hidden size of 512 and dropout of $.33$. This LSTM is trained using AdamW \citep{loshchilov2019decoupled} 
and we hotstart it by initially training on the set of word types in the training set (the set of unique wordforms in it).

\section{Proof of \cref{prop:bound_entropy}} \label{app:proof_entropy}

We present here the proof of \cref{prop:bound_entropy}. This proposition is repeated here for convenience:

\paragraph{\cref{prop:bound_entropy}.}
\textit{If a language model $p(\bw)$ is $\varepsilon$-smooth, then its entropy is finite, i.e.
$\ent(W) < \infty$.}
\begin{proof}
To prove this, we will break the entropy of a string in two parts. The entropy of the first character, plus the entropy of the following ones given the first, as in:
\begin{equation}
    \entropy(W) = \entropy(W_1) + \entropy(W_{>1} \mid W_1)
\end{equation}
We first bound the entropy of the first character using a uniform distribution upperbound:
\begin{align}
    \entropy(W_1) &= -\sum_{\bw_1 \in \Sigma} p(\bw_1) \log p(\bw_1) \\
    &\le - \log |\Sigma|   \nonumber
\end{align}
We now use the $\varepsilon$-smoothness property to upperbound the entropy of the following characters:
\begin{align}
    \entropy(& W_{>1} \mid W_1) \\
    &= \sum_{\bw_1 \in \Sigma} p(\bw_1) \entropy(W_{>1} \mid W_1 = \bw_1) \nonumber \\
    &= p(\STOP) \entropy(W_{>1} \mid W_1 = \STOP)  \nonumber \\
    & \quad +\sum_{\bw_1 \in \Sigma, \bw != \STOP} p(\bw_1) \entropy(W_{>1} \mid W_1 = \bw_1)  \nonumber \\
    &= \sum_{\bw_1 \in \Sigma, \bw != \STOP} p(\bw_1) \entropy(W_{>1} \mid W_1 = \bw_1)  \nonumber \\
    &\le \sum_{\bw_1 \in \Sigma, \bw != \STOP} p(\bw_1) \entropy(W_{>1})  \nonumber \\
    &\le \left(1 - \varepsilon \right) \entropy(W)  \nonumber
\end{align}
Given both these upperbounds, we can bound the full wordform entropy:
\begin{align}
    \entropy(W) &= \entropy(W_1) + \entropy(W_{>1} \mid W_1) \\
    &\le - \log |\Sigma| 
     + \left(1 - \varepsilon \right) \entropy(W)  \nonumber
\end{align}
Finally, with simple algebraic manipulations we complete the proof:
\begin{align}
    \entropy(W) &\le - \frac{1}{\varepsilon} \log |\Sigma|
\end{align}

\end{proof}

\end{document}